\newcommand{\methodAbbr}{ReAct~}
\DeclareMathOperator{\esn}{ESN}
\DeclareMathOperator{\softmax}{\mathrm{Softmax}}
\newcommand{\Din}{\mathcal{D}_\mathrm{in}}
\newcommand{\Dout}{\mathcal{D}_\mathrm{out}}
\newcommand{\calP}{\mathcal{P}}
\newcommand{\calN}{\mathcal{N}}
\newcommand{\calX}{\mathcal{X}}
\newcommand{\calY}{\mathcal{Y}}
\newcommand{\bb}{\mathbf{b}}
\newcommand{\bx}{\mathbf{x}}
\newcommand{\bz}{\mathbf{z}}
\newtheorem{lemma}{Lemma}
\definecolor{Gray}{gray}{0.9}
\newcommand{\chuan}[1]{{\color{purple}[\textbf{Chuan}: #1]}}
\def\*#1{\mathbf{#1}}
\title{ ReAct: Out-of-distribution Detection With \\Rectified Activations}
\author{%
  Yiyou Sun \\
  Department of Computer Sciences\\
  University of Wisconsin-Madison\\
  \texttt{sunyiyou@cs.wisc.edu} \\
  \And 
  Chuan Guo\\
  Facebook AI Research \\
  \texttt{chuanguo@fb.com} \\
  \And 
  Yixuan Li \\
  Department of Computer Sciences\\
  University of Wisconsin-Madison\\
  \texttt{sharonli@cs.wisc.edu} \\
}
\begin{document}

\maketitle

\begin{abstract}
Out-of-distribution (OOD) detection has received much attention lately due to its practical importance in enhancing the safe deployment of neural networks. One of the primary challenges is that models often produce highly confident predictions on OOD data, which undermines the driving principle in OOD detection that the model should only be confident about in-distribution samples. In this work, we propose \textbf{ReAct}---a simple and effective technique for reducing model overconfidence on OOD data. Our method is motivated by novel analysis on internal activations of neural networks, which displays highly distinctive signature patterns for  OOD distributions. Our method can generalize effectively to different network architectures and different OOD detection scores. We empirically demonstrate that ReAct achieves competitive detection performance on a comprehensive suite of benchmark datasets, and give theoretical explication for our method's efficacy. On the ImageNet benchmark, ReAct reduces the false positive rate (FPR95) by {25.05}\% compared to the previous best method\footnote{Code is available at:  \url{https://github.com/deeplearning-wisc/react.git}}.
\end{abstract}

\section{Introduction}

Neural networks deployed in real-world systems often encounter out-of-distribution (OOD) inputs---unknown samples that the network has not been exposed to during training. Identifying and handling these OOD inputs can be paramount in safety-critical applications such as autonomous driving~\cite{filos2020can} and health care. For example, an autonomous vehicle may fail to recognize objects on the road that do not appear in its object detection model's training set, potentially leading to a crash. This can be prevented if the system identifies the unrecognized object as OOD and warns the driver in advance.

A driving idea behind OOD detection is that the model should be much more uncertain about samples outside of its training distribution. However, \citeauthor{nguyen2015deep}~\cite{nguyen2015deep} revealed that modern neural networks can produce overconfident predictions on OOD inputs. This phenomenon renders the separation of in-distribution (ID) and OOD data a non-trivial task. Indeed, much of the prior work on OOD detection focused on defining more suitable measures of OOD uncertainty~\cite{hsu2020generalized, lakshminarayanan2017simple, liang2018enhancing, lee2018simple, liu2020energy, huang2021importance}. Despite the improvement, it is arguable that continued research progress in OOD detection requires insights into the fundamental cause and mitigation of model overconfidence on OOD data. %

\begin{figure}[t]
	\begin{center}
		\includegraphics[width=\linewidth]{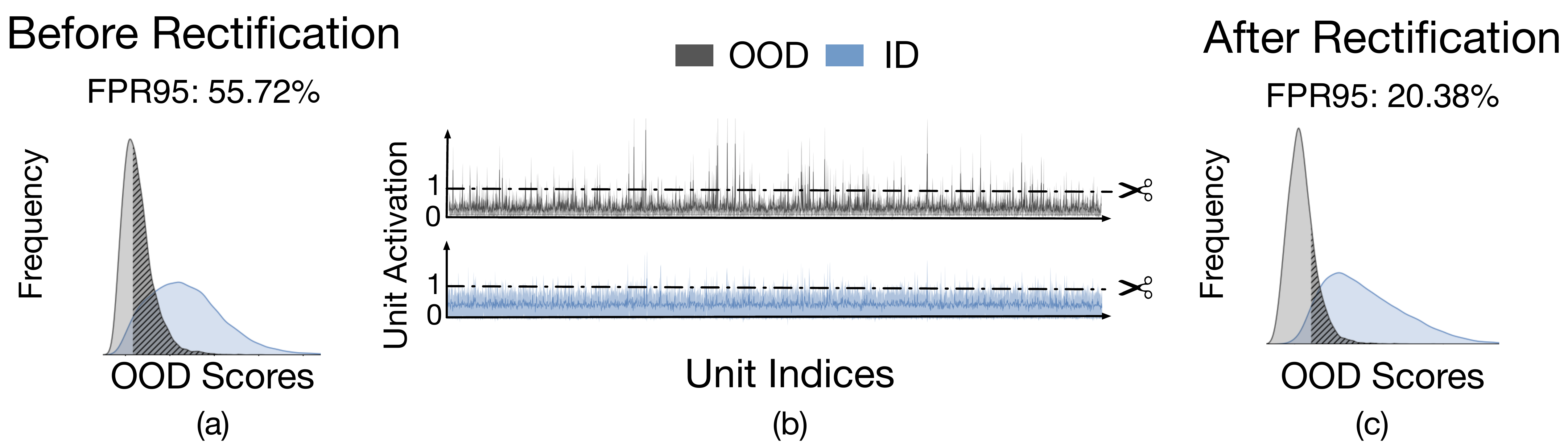}
	\end{center}
	\caption{\small Plots showing (a) the distribution of ID (ImageNet~\cite{deng2009imagenet}) and OOD (iNaturalist~\cite{inat}) uncertainty scores before truncation, (b) the distribution of per-unit activations in the penultimate layer for ID and OOD data, and (c) the distribution of OOD uncertainty scores~\cite{liu2020energy} after rectification. Applying \methodAbbr drastically improves the separation of ID and OOD data. See text for details. }
	\label{fig:teaser}
\end{figure}
In this paper, we start by revealing an important observation that OOD data can trigger unit activation patterns that are significantly different from ID data. \autoref{fig:teaser}(b) shows the distribution of activations in the penultimate layer of ResNet-50 trained on ImageNet~\cite{deng2009imagenet}. Each point on the horizontal axis corresponds to a single unit. The mean and standard deviation are shown by the solid line and shaded area, respectively. The mean activation for ID data (blue) is well-behaved with a near-constant mean and standard deviation. In contrast, for OOD data (gray), the mean activation has significantly larger variations across units and is biased towards having sharp positive values (\emph{i.e.}, positively skewed). As a result, such high unit activation can undesirably manifest in model output, producing overconfident predictions on OOD data. A similar distributional property holds for other OOD datasets as well.

%

The above observation naturally inspires a simple yet surprisingly effective method---\textbf{Re}ctified \textbf{Act}ivations (dubbed \textbf{ReAct}) for OOD detection. In particular, the outsized activation of a few selected hidden units can be attenuated by rectifying the activations at an upper limit $c > 0$. 
Conveniently, this can be done on a pre-trained model without any modification to training. The dashed horizontal line in \autoref{fig:teaser}(b) shows the cutoff point $c$, and its effect on the OOD uncertainty score is shown in \autoref{fig:teaser}(c). After rectification, the output distributions for ID and OOD data become much more well-separated and the false positive rate (FPR) is significantly reduced from $55.72\%$ to $20.38\%$. Importantly, this truncation largely preserves the activation for in-distribution data, and therefore ensures the classification accuracy on the original task is largely comparable.  

We provide both empirical and theoretical insights, characterizing and explaining the mechanism by which \methodAbbr improves OOD detection. We perform extensive evaluations and establish state-of-the-art performance on a suite of common OOD detection benchmarks, including CIFAR-10 and CIFAR-100, as well as a large-scale ImageNet dataset~\cite{deng2009imagenet}. \methodAbbr outperforms the best baseline by a large margin, reducing the average FPR95 by up to {25.05}\%. We further analyze our method theoretically and show that ReAct is more beneficial when OOD activations are more chaotic (\emph{i.e.}, having a larger variance) and positively skewed compared to ID activations, a behavior that is typical of many OOD datasets (\emph{cf.} \autoref{fig:teaser}). In summary, our \textbf{key results and contributions} are:
\begin{enumerate}
    \item We introduce \methodAbbr---a simple and effective \emph{post hoc} OOD detection approach that utilizes activation truncation. 
    We show that \methodAbbr can generalize effectively to different network architectures and works with different OOD detection methods including MSP~\cite{Kevin}, ODIN~\cite{liang2018enhancing}, and energy score~\cite{liu2020energy}.
    \item We extensively evaluate \methodAbbr on a suite of OOD detection tasks and establish a state-of-the-art performance among post hoc methods. Compared to the previous best method, \methodAbbr achieves an FPR95 reduction of {25.05}\% on a large-scale ImageNet benchmark. 
    \item We provide both empirical ablation and theoretical analysis, revealing important insights that abnormally high activations on OOD data can harm their detection and how \methodAbbr effectively mitigates this issue. We hope that our insights inspire future research to further examine the internal mechanisms of neural networks for OOD detection. Code and dataset will be released for reproducible research.
\end{enumerate} 

\section{Background}

 The problem of \emph{out-of-distribution detection} for classification is defined using the following setup. Let $\calX$ (resp. $\calY$) be the input (resp. output) space and let $\calP$ be a distribution over $\calX \times \calY$. Let $f: \calX \rightarrow \mathbb{R}^{|\calY|}$ be a neural network trained on samples drawn from $\calP$ to output a logit vector, which is used to predict the label of an input sample. Denote by $\Din$ the marginal distribution of $\calP$ for $\calX$, which represents the distribution of in-distribution data. At test time, the environment can present a distribution $\Dout$ over $\calX$ of out-of-distribution data. The goal of OOD detection is to define a decision function $G$ such that for a given test input $\*x\in \mathcal{X}$:
\begin{equation*}
    G(\bx; f) =
    \begin{cases}
    0 & \text{if } \bx \sim \Dout, \\
    1 & \text{if } \bx \sim \Din.
    \end{cases}
\end{equation*}
The difficulty of OOD detection greatly depends on the separation between $\Din$ and $\Dout$. In the extreme case, if $\Din = \Dout$ then the optimal decision function is a random coin flip. In practice, $\Dout$ is often defined by a distribution that simulates unknowns encountered during deployment time, such as samples from an irrelevant distribution whose label set has no intersection with $\calY$ and {therefore should not be predicted by the model}. 

\section{Method}
\label{sec:method}

We introduce a simple and surprisingly effective technique,  \textbf{Re}ctified \textbf{Act}ivations (ReAct), for improving OOD detection performance. Our key idea is to perform \emph{post hoc} modification to the unit activation, so to bring the overall activation pattern closer to the well-behaved case. Specifically, we consider a pre-trained neural network parameterized by $\theta$, which encodes an input $\*x \in \mathbb{R}^d$ to a feature space with dimension $m$. We denote by $ h(\*x) \in \mathbb{R}^m$ the feature vector from the penultimate layer of the network. A weight matrix $\mathbf{W} \in \mathbb{R}^{m\times K}$ connects the feature $h(\*x)$ to the output $f(\*x)$, where $K$ is the total number of classes in $\mathcal{Y}=\{1,2,...,K\}$.

\textbf{ReAct: Rectified Activation.} We propose the \texttt{ReAct} operation, which is applied on the penultimate layer of a network:
\begin{align}
\bar h(\*x) = \texttt{ReAct} (h(\*x); c),
\end{align}
where  $\texttt{ReAct}(x;c) = \min(x,c)$ and is applied element-wise to the feature vector $h(\*x)$. In effect, this operation truncates activations above $c$ to limit the effect of noise.  The model output after \emph{rectified activation} is given by:
\begin{align}
f^{\text{ReAct}}(\*x;\theta) = \*W^\top \bar h(\*x) + \mathbf{b},
\end{align}
where $\mathbf{b} \in \mathbb{R}^K$ is the bias vector. A higher $c$ indicates a larger threshold of activation truncation. When $c=\infty$, the output becomes equivalent to the original output $f(\*x;\theta)$ without rectification, where $f(\*x;\theta) = \*W^\top h(\*x) + \*b$.  Ideally, the rectification parameter $c$ should be  chosen to sufficiently preserve the activations for ID data while rectifying that of OOD data. In practice, we set $c$ based on the $p$-th percentile of activations estimated on the ID data. For example, when $p=90$, it indicates that 90\% percent of the ID activations are less than the threshold $c$. We discuss the effect of percentile in detail in Section~\ref{sec:experiments}.

\textbf{OOD detection with rectified activation.} During test time, \methodAbbr can be leveraged by a variety of downstream OOD scoring functions relying on $f^{\text{ReAct}}(\*x;\theta)$: 
\begin{align}
\label{eq:threshold}
	G_{\lambda}(\*x; f^\text{ReAct})=\begin{cases} 
      \text{in } & S(\*x;f^\text{ReAct})\ge \lambda \\
      \text{out} & S(\*x;f^\text{ReAct}) < \lambda 
   \end{cases},
\end{align}
 where a thresholding mechanism is exercised to distinguish between ID and OOD during test time. To align with the convention,  
samples with higher scores $S(\*x;f)$ are classified as ID and vice versa. The threshold $\lambda$ is typically chosen so that a high fraction of ID data (\emph{e.g.,} 95\%) is correctly classified.  \methodAbbr can be compatible with several commonly used OOD scoring functions  derived from the model output $f(\*x;\theta)$, including the softmax confidence~\cite{Kevin}, ODIN score~\cite{liang2018enhancing}, and the energy score~\cite{liu2020energy}. {For readers' convenience, we provide an expansive description of these methods in Appendix~\ref{sec:baseline}}. In Section~\ref{sec:experiments}, we default to using the energy score (since it is hyperparameter-free and does not require fine-tuning), but demonstrate the benefit of using ReAct with other OOD scoring functions too.
\vspace{-0.2cm}
\section{Experiments}
\label{sec:experiments}

In this section, we evaluate \methodAbbr on a suite of OOD detection tasks. We first evaluate a on large-scale OOD detection benchmark based on ImageNet~\cite{huang2021mos} ( Section~\ref{sec:imagenet}), and then proceed in Section~\ref{sec:common_benchmark} with CIFAR benchmarks~\cite{krizhevsky2009learning}.

\subsection{Evaluation on Large-scale ImageNet Classification Networks}
\label{sec:imagenet}

We first evaluate \methodAbbr on a large-scale OOD detection benchmark developed in~\cite{huang2021mos}. Compared to the CIFAR benchmarks that are routinely used in literature, the ImageNet benchmark is more challenging due to a larger label space $(K=1,000)$. 
Moreover, such large-scale evaluation is more relevant to real-world applications, where the deployed models often operate on images that have high resolution and contain more classes than the CIFAR benchmarks. 

\textbf{Setup.} 
We use a pre-trained ResNet-50 model~\cite{he2016identity} for ImageNet-1k. At test time, all images are resized to 224 $\times$ 224. 
We evaluate on four test OOD datasets from (subsets of) \texttt{Places365}~\cite{zhou2017places}, \texttt{Textures}~\cite{cimpoi2014describing}, \texttt{iNaturalist}~\cite{inat}, and \texttt{SUN}~\cite{sun} with non-overlapping categories w.r.t ImageNet. Our evaluations span a diverse range of domains including fine-grained images, scene images, and textural images (see Appendix~\ref{sec:ood-categories} for details).  We use a validation set of Gaussian noise images, which are generated by sampling from $\mathcal{N}(0,1)$ for each pixel location. To ensure validity, we further verify the activation pattern under Gaussian noise, which exhibits a similar distributional trend with positive skewness and chaoticness; see Figure~\ref{fig:noise_act} in Appendix~\ref{sec:noise_act} for details.  We select $p$ from $\{10, 65, 80, 85, 90, 95, 99\}$ based on the FPR95 performance. The optimal $p$ is 90. All experiments are based on the hardware described in Appendix~\ref{sec:software}.

\textbf{Comparison with competitive OOD detection methods.} In Table~\ref{tab:main-results}, we compare \methodAbbr with OOD detection methods that are competitive in the literature, where \methodAbbr establishes the state-of-the-art performance. For a fair comparison, all  methods use the pre-trained networks \emph{post hoc}.\@
We report performance for each OOD test dataset, as well as the average of the four. ReAct outperforms all baselines considered, including Maximum Softmax Probability~\cite{Kevin}, ODIN~\cite{liang2018enhancing}, Mahalanobis distance~\cite{lee2018simple}, and energy score~\cite{liu2020energy}.  Noticeably, \methodAbbr reduces the FPR95 by \textbf{25.05}\% compared to the best baseline~\cite{liang2018enhancing} on ResNet. Note that Mahalanobis requires training a separate binary classifier, and displays limiting performance since the increased size of label space makes the class-conditional Gaussian density estimation less viable. In contrast, \methodAbbr is much easier to use in practice, and can be implemented through a simple \emph{post hoc} activation rectification.

\begin{table}[t]
\centering
\scalebox{0.7}{
\begin{tabular}{llcccccccccc}
\toprule
\multicolumn{1}{c}{\multirow{4}{*}{\textbf{Model}}} & \multicolumn{1}{c}{\multirow{4}{*}{\textbf{Methods}}} & \multicolumn{8}{c}{\textbf{OOD Datasets}} & \multicolumn{2}{c}{\multirow{2}{*}{\textbf{Average}}} \\ \cline{3-10}
\multicolumn{1}{c}{} & \multicolumn{1}{c}{} & \multicolumn{2}{c}{\textbf{iNaturalist}} & \multicolumn{2}{c}{\textbf{SUN}} & \multicolumn{2}{c}{\textbf{Places}} & \multicolumn{2}{c}{\textbf{Textures}} & \multicolumn{2}{c}{}\\
\multicolumn{1}{c}{} & \multicolumn{1}{c}{} & FPR95 & AUROC & FPR95 & AUROC & FPR95 & AUROC & FPR95 & AUROC & FPR95 & AUROC \\
\multicolumn{1}{c}{} & \multicolumn{1}{c}{} & \multicolumn{1}{c}{$\downarrow$} & \multicolumn{1}{c}{$\uparrow$} & \multicolumn{1}{c}{$\downarrow$} & \multicolumn{1}{c}{$\uparrow$} & \multicolumn{1}{c}{$\downarrow$} & \multicolumn{1}{c}{$\uparrow$} & \multicolumn{1}{c}{$\downarrow$} & \multicolumn{1}{c}{$\uparrow$} & \multicolumn{1}{c}{$\downarrow$} & \multicolumn{1}{c}{$\uparrow$} \\ \midrule
\multirow{6}{*}{ResNet} 
 & MSP ~\cite{Kevin} & 54.99 & 87.74 & 70.83 & 80.86 & 73.99 & 79.76 & 68.00 & 79.61 & 66.95 & 81.99 \\
 & ODIN ~\cite{liang2018enhancing} & 47.66 & 89.66 & 60.15 & 84.59 & 67.89 & 81.78 & 50.23 & 85.62 & 56.48 & 85.41 \\
 & Mahalanobis \cite{lee2018simple} & 97.00 & 52.65 & 98.50 & 42.41 & 98.40 & 41.79 & 55.80 & 85.01 & 87.43 & 55.47 \\
 & Energy ~\cite{liu2020energy} & 55.72 & 89.95 & 59.26 & 85.89 & 64.92 & 82.86 & 53.72 & 85.99 & 58.41 & 86.17 \\
 &  \textbf{\methodAbbr (Ours)} & \textbf{20.38} & \textbf{96.22 } & \textbf{24.20} & \textbf{94.20 } & \textbf{33.85} & \textbf{91.58} & \textbf{47.30} & \textbf{89.80} & \textbf{31.43} & \textbf{92.95} \\ \midrule
\multirow{6}{*}{MobileNet} 
 & MSP ~\cite{Kevin} & 64.29 & 85.32 & 77.02 & 77.10 & 79.23 & 76.27 & 73.51 & 77.30 & 73.51 & 79.00 \\
 & ODIN ~\cite{liang2018enhancing} & 55.39 & 87.62 & 54.07 & 85.88 & 57.36 & 84.71 & 49.96 & 85.03 & 54.20 & 85.81 \\
 & Mahalanobis ~\cite{lee2018simple} & 62.11 & 81.00 & 47.82 & 86.33 & 52.09 & 83.63 & 92.38 & 33.06 & 63.60 & 71.01	\\
 & Energy ~\cite{liu2020energy} & 59.50 & 88.91 & 62.65 & 84.50 & 69.37 & 81.19 & 58.05 & 85.03 & 62.39 & 84.91 \\
 & \textbf{\methodAbbr (Ours)} & \textbf{42.40} & \textbf{91.53} & \textbf{47.69} & \textbf{88.16} & \textbf{51.56} & \textbf{86.64 } & \textbf{38.42 } & \textbf{91.53 } & \textbf{45.02} & \textbf{89.47 } \\ \bottomrule 
\end{tabular}
} 
\vspace{-0.cm}
\caption[]{\small \textbf{Main results.} Comparison with competitive \emph{post hoc} out-of-distribution detection methods. All methods are based on a model trained on \textbf{ID data only} (ImageNet-1k), without using any auxiliary outlier data. $\uparrow$ indicates larger values are better and $\downarrow$ indicates smaller values are better. All values are percentages. %
}

\label{tab:main-results}

\end{table}

\textbf{Effect of rectification threshold $c$.} 
We now characterize  the effect of the rectification parameter $c$, which can be modulated by the percentile $p$ described in Section~\ref{sec:method}.  
In Table~\ref{tab:c-ablation}, we summarize the OOD detection performance, where we vary $p=\{10, 65, 80, 85, 90, 95, 99\}$.  
This ablation confirms that over-activation does compromise the ability to detect OOD data, and \methodAbbr can effectively alleviate this problem. Moreover, when $p$ is sufficiently large, \methodAbbr can improve OOD detection while maintaining a comparable ID classification accuracy. Alternatively, once a sample is detected to be ID, one can always use the original activation $h(\*x)$, \emph{which is guaranteed to give identical classification accuracy}. When $p$ is too small, OOD performance starts to degrade as expected.%

 \begin{table}[t]
\centering
\vspace{-0.2cm}
\scalebox{0.8}{
\begin{tabular}{c|ccccc}
\toprule
\textbf{Rectification percentile}               & \textbf{FPR95} \newline $~\downarrow$ & \textbf{AUROC} $~\uparrow$ & \textbf{AUPR} \newline$~\uparrow$ & \textbf{ID ACC.}$~\uparrow$  & \textbf{Rectification threshold} $c$ \\ \midrule
No \methodAbbr~\cite{liu2020energy} & 58.41                                  & 86.17                       & 96.88                              & 75.08   &  $\infty$                                    \\ 
$p=99$                              & 44.57                                  & 90.45                       & 97.96                              & {75.12}   & 2.25                                 \\
$p=95$                              & 35.39                                  & 92.39                       & 98.37                              & 74.76   & 1.50                                 \\
$p=90$                              & {31.43}                                  & {92.95}                       & {98.50}                              & 73.75   & 1.00                                 \\
$p=85$                              & 34.08                                  & 92.05                       & 98.35                              & 72.91   & 0.84                                 \\
$p=80$                              & 41.51                                  & 89.54                       & 97.91                              & 71.93   & 0.72                                 \\
$p=65$                              & 74.62                                  & 74.14                       & 94.39                              & 67.14   & 0.50                                 \\
$p=10$                              & 74.70                                  & 57.55                       & 86.06                              & 1.22    & 0.06                                 \\ 
\bottomrule
\end{tabular}
}
\vspace{-0.1cm}
\caption[]{\small Effect of rectification threshold for inference. Model is trained on ImageNet using ResNet-50~\cite{he2016deep}. All numbers are percentages and are averaged over 4 OOD test datasets. }
\vspace{-0.2cm}
\label{tab:c-ablation}
\end{table}

\textbf{Effect on other network architectures.} We show that \methodAbbr is effective on a different architecture in Table~\ref{tab:main-results}. In particular, we consider a lightweight model MobileNet-v2~\cite{mobilenet2018CVPR}, which can be suitable for OOD detection in on-device mobile applications. Same as before, we apply \methodAbbr on the output of the penultimate layer, with the rectification threshold chosen based on the $90$-th percentile. Our method reduces the FPR95 by \textbf{9.18}\% compared to the best baseline~\cite{liang2018enhancing}.

\textbf{What about applying \methodAbbr on other layers?} Our results suggest that applying \methodAbbr on the penultimate layer is the most effective, since the activation patterns are most distinctive. To see this, we provide the activation and performance study for intermediate layers in Appendix~\ref{sec:diff_layers} (see Figure~\ref{fig:diff_layers} and Table~\ref{tab:diff_layers}). Interestingly, early layers display less distinctive signatures between ID and OOD data. This is expected because neural networks generally capture lower-level features in early layers (such as Gabor filters~\cite{zeiler2014visualizing} in layer 1), whose activations can be very similar between ID and OOD. The semantic-level features only emerge as with deeper layers, where \methodAbbr is the most effective. %

\vspace{-0.2cm}
\subsection{Evaluation on CIFAR Benchmarks}
\label{sec:common_benchmark}

\paragraph{Datasets.}
We evaluate on CIFAR-10 and CIFAR-100~\cite{krizhevsky2009learning} datasets as in-distribution data, using the standard split with 50,000 training images and 10,000 test images. For OOD data, we consider six common benchmark datasets: \texttt{Textures}~\cite{cimpoi2014describing}, \texttt{SVHN}~\cite{netzer2011reading}, \texttt{Places365}~\cite{zhou2017places}, \texttt{LSUN-Crop}~\cite{yu2015lsun}, \texttt{LSUN-Resize}~\cite{yu2015lsun}, and \texttt{iSUN}~\cite{xu2015turkergaze}.

\textbf{Experimental details.}
We train a standard ResNet-18~\cite{he2016deep} model on in-distribution data. The feature dimension of the penultimate layer is 512. For both CIFAR-10 and CIFAR-100, the models are trained for 100 epochs. The start learning rate is 0.1 and decays by a factor of 10 at epochs 50, 75, and 90. For threshold $c$, we use the 90-th percentile of activations estimated on the ID data. %

\begin{table}[b]
\scalebox{0.77}{
\begin{tabular}{lccc|ccc|ccc}
\toprule
\multicolumn{1}{c}{\multirow{3}{*}{\textbf{Method}}} & \multicolumn{3}{c}{\textbf{CIFAR-10}}                                                                       & \multicolumn{3}{c}{\textbf{CIFAR-100}}                                                                      & \multicolumn{3}{c}{\textbf{ImageNet}}                                                                       \\
\multicolumn{1}{c}{}                        & \textbf{FPR95}                            & \textbf{AUROC}                          & \textbf{AUPR}                           & \textbf{FPR95}                            & \textbf{AUROC}                          & \textbf{AUPR}                           & \textbf{FPR95}                            & \textbf{AUROC}                          & \textbf{AUPR}                           \\
\multicolumn{1}{c}{}                        & \multicolumn{1}{c}{$\downarrow$} & \multicolumn{1}{c}{$\uparrow$} & \multicolumn{1}{c}{$\uparrow$} & \multicolumn{1}{c}{$\downarrow$} & \multicolumn{1}{c}{$\uparrow$} & \multicolumn{1}{c}{$\uparrow$} & \multicolumn{1}{c}{$\downarrow$} & \multicolumn{1}{c}{$\uparrow$} & \multicolumn{1}{c}{$\uparrow$} \\ \midrule

Softmax score~\cite{Kevin} & 56.71 & 91.17 & 79.11 & 80.72 & 76.83 & 78.41 & 66.95 & 81.99 & 95.76 \\
\rowcolor{Gray} Softmax score + \methodAbbr & 53.81 & 91.70 & 92.11 & 75.45 & 80.40 & 84.28 & 58.28 & 87.06 & 97.22 \\ \midrule
Energy~\cite{liu2020energy} & 35.60 & 93.57 & 95.01 & 71.93 & 82.82 & 86.28 & 58.41 & 86.17 & 96.88 \\
\rowcolor{Gray} Energy+\methodAbbr & {32.91} & \textbf{94.27 } & \textbf{95.53} & \textbf{59.61} & \textbf{87.48} & \textbf{89.63} & \textbf{31.43} & \textbf{92.95} & \textbf{98.50} \\
\midrule
 ODIN~\cite{liang2018enhancing} & 31.10 & 93.79 & 94.95 & 66.21 & 82.88 & 86.25 & 56.48 & 85.41 & 96.61 \\
\rowcolor{Gray} ODIN+\methodAbbr & \textbf{28.81} & 94.04 & 94.82 & 59.91 & 85.23 & 87.53 & 44.10 & 90.70 & 98.04 \\ 
\bottomrule          
\end{tabular}
}
        \caption[]{\small \textbf{Ablation results.} \methodAbbr is compatible with different OOD scoring functions. For each ID dataset, we use the same model and compare the performance with and without \methodAbbr respectively. $\uparrow$ indicates larger values are better and $\downarrow$ indicates smaller values are better. All values are percentages and are averaged over multiple OOD test datasets. Detailed performance for each OOD test dataset is available in Table~\ref{tab:detail-results} in Appendix. }
        \label{tab:ablation-results}
      \vspace{-0.3cm}
\end{table}

\textbf{ReAct is compatible with various OOD scoring functions.} We show in Table~\ref{tab:ablation-results} that \methodAbbr is a flexible method that is compatible with alternative scoring functions $S(\*x;f^\text{ReAct})$. To see this, we consider commonly used scoring functions, and compare the performance both with and without using ReAct respectively. In particular, we consider softmax confidence~\cite{Kevin}, ODIN score~\cite{liang2018enhancing} as well as energy score~\cite{liu2020energy}---all of which derive OOD scores directly from the output $f(\*x)$. In particular, using ReAct on energy score yields the best performance, which is desirable as energy is a hyperparameter-free OOD score and is easy to compute in practice. Note that Mahalanobis~\cite{lee2018simple} estimates OOD score using feature representations instead of the model output $f(\*x)$, hence is less compatible with ReAct. On all three in-distribution datasets, using \methodAbbr consistently outperforms the counterpart without rectification. Results in Table~\ref{tab:ablation-results} are based on the average across multiple OOD test datasets. \emph{Detailed performance for each OOD test dataset is provided in Appendix, Table~\ref{tab:detail-results}}.

\section{Theoretical Analysis}
\label{sec:theory}

\begin{figure}[tb]
	\begin{center}
		\includegraphics[width=0.90\linewidth]{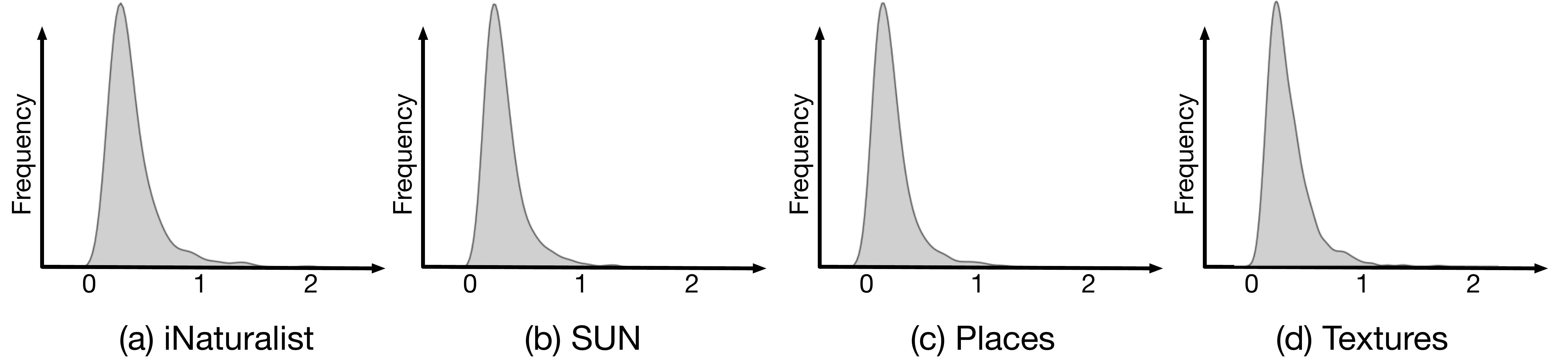}
	\end{center}
    \vspace{-2ex}
	\caption{\small Positively skewed distribution of $\mu_i$ (mean of each unit) in the penultimate layer for four OOD datasets (iNaturalist~\cite{inat}, SUN~\cite{sun}, Places~\cite{zhou2017places}, Textures~\cite{cimpoi2014describing}). Model is trained on ImageNet. Note that the right tail has a higher density than the left tail. By left and right tail we refer to the samples that are to the left and right of the \emph{median}, which is close to the mode in the case of skewed distribution.}
	\label{fig:mui}
\end{figure}

To better understand the effect of ReAct, we mathematically model the ID and OOD activations as rectified Gaussian distributions and derive their respective distributions after applying ReAct. These modeling assumptions are based on the activation statistics observed on ImageNet in Figure~\ref{fig:teaser}. In the following analysis, we show that {ReAct reduces mean OOD activations more than ID activations since OOD activations are more positively skewed} (see \autoref{sec:theory_details} for derivations).

\paragraph{ID activations.} Let $h(\bx) = (z_1,\ldots,z_m) =: \bz$ be the activations for the penultimate layer. We assume each $z_i \sim \calN^R(\mu, \sigma_\text{in}^2)$ for some $\sigma_\text{in} > 0$. 
Here $\calN^R(\mu, \sigma_\text{in}^2) = \max(0, \calN(\mu, \sigma_\text{in}^2))$ denotes the rectified Gaussian distribution, which reflects the fact that activations after ReLU have no negative components. Before truncation with ReAct, the expectation of $z_i$ is given by:
\begin{equation*}
\mathbb{E}_\text{in}[z_i] = \left[ 1 - \Phi \left( \frac{-\mu}{\sigma_\text{in}} \right) \right] \cdot \mu + \phi \left( \frac{-\mu}{\sigma_\text{in}} \right) \cdot \sigma_\text{in},
\end{equation*}
where $\Phi$ and $\phi$ denote the cdf and pdf of the standard normal distribution, respectively. After rectification with ReAct, the expectation of $\bar{z}_i = \min(z_i, c)$ is:
\begin{equation*}
\mathbb{E}_\text{in}[\bar{z}_i] = \left[ \Phi \left( \frac{c - \mu}{\sigma_\text{in}} \right) - \Phi \left( \frac{-\mu}{\sigma_\text{in}} \right) \right] \cdot \mu + \left[1 - \Phi \left( \frac{c - \mu}{\sigma_\text{in}} \right) \right] \cdot c + \left[ \phi \left( \frac{-\mu}{\sigma_\text{in}} \right) - \phi \left( \frac{c - \mu}{\sigma_\text{in}} \right) \right] \cdot \sigma_\text{in},
\end{equation*}
The reduction in activation after ReAct is:
\begin{equation}
    \label{eq:id_reduction}
    \mathbb{E}_\text{in} [z_i - \bar{z}_i] = \phi \left( \frac{c - \mu}{\sigma_\text{in}} \right) \cdot \sigma_\text{in} - \left[ 1 - \Phi \left( \frac{c - \mu}{\sigma_\text{in}} \right) \right] \cdot (c - \mu)
\end{equation}

\paragraph{OOD activations.} We model OOD activations as being generated by a two-stage process: Each OOD distribution defines a set of $\mu_i$'s that represent the mode of the activation distribution for unit $i$, and the activations $z_i$ given $\mu_i$ is represented by $z_i | \mu_i \sim \calN^R(\mu_i, \tau^2)$ with $\tau > 0$. For instance, the dark gray line in Figure~\ref{fig:teaser} shows the set $\mu_i$'s on the iNaturalist dataset, and the light gray area depicts the distribution of $z_i | \mu_i$. One commonality across different OOD datasets is that the distribution of $\mu_i$ is \emph{positively skewed}. The assumption of positive skewness is motivated by our observation on real OOD data. Indeed, Figure~\ref{fig:mui} shows the empirical distribution of $\mu_i$ on an ImageNet pre-trained model for four OOD datasets, all of which display strong positive-skewness, \emph{i.e.}, the right tail has a much higher density than the left tail. This observation is surprisingly consistent across datasets and model architectures. Although a more in-depth understanding of the fundamental cause of positive skewness is important, for this work, we chose to rely on this empirically verifiable assumption and instead focus on analyzing our method ReAct.

Utilizing the positive-skewness property of $\mu_i$, we analyze the distribution of $z_i$ after marginalizing out $\mu_i$, which corresponds to averaging across different $\mu_i$'s induced by various OOD distributions. Let $x_i | \mu_i \sim \calN(\mu_i, \tau^2)$ so that $z_i | \mu_i = \max(x_i | \mu_i, 0)$. Since $x_i | \mu_i$ is symmetric and $\mu_i$ is positively-skewed, the marginal distribution of $x_i$ is also positively-skewed\footnote{This can be argued rigorously using Pearson's mode skewness coefficient if the distribution of $\mu_i$ is unimodal.}, which we model with the epsilon-skew-normal (ESN) distribution~\cite{mudholkar2000epsilon}. Specifically, we assume that $x_i \sim \esn(\mu, \sigma_\text{out}^2, \epsilon)$, which has the following density function:
\begin{equation}
    q(x) = 
    \begin{cases}
    \phi((x - \mu) / \sigma_\text{out} (1+\epsilon)) / \sigma_\text{out} & \text{if } x < \mu, \\
    \phi((x - \mu) / \sigma_\text{out} (1-\epsilon)) / \sigma_\text{out} & \text{if } x \geq \mu.
    \end{cases}
    \label{eq:esn}
\end{equation}
with $\epsilon \in [-1,1]$ controlling the skewness.  In particular, the ESN distribution is positively-skewed when $\epsilon < 0$. It follows that $z_i = \max(x_i, 0)$, with expectation:
\begin{equation}
    \mathbb{E}_\text{out}[z_i] = \mu - (1+\epsilon) \Phi \left( \frac{-\mu}{(1+\epsilon) \sigma_\text{out}} \right) \cdot \mu + (1+\epsilon)^2 \phi \left( \frac{-\mu}{(1+\epsilon) \sigma_\text{out}} \right) \cdot \sigma_\text{out} - \frac{4 \epsilon}{\sqrt{2 \pi}} \cdot \sigma_\text{out}.
    \label{eq:ood_before_mean}
\end{equation}
Expectation after applying ReAct becomes:
\begin{align}
    \mathbb{E}_\text{out}[\bar{z}_i] &= \mu - (1 + \epsilon) \Phi \left( \frac{-\mu}{(1+\epsilon)\sigma_\text{out}} \right) \cdot \mu + (1 - \epsilon) \left[ 1 - \Phi \left( \frac{c - \mu}{(1-\epsilon)\sigma_\text{out}} \right) \right] \cdot (c - \mu) \nonumber \\
    &\qquad + \left[ (1+\epsilon)^2 \phi\left(\frac{-\mu}{(1+\epsilon)\sigma_\text{out}}\right) - (1-\epsilon)^2 \phi\left(\frac{c - \mu}{(1-\epsilon)\sigma_\text{out}}\right) - \frac{4 \epsilon}{\sqrt{2 \pi}} \right] \cdot \sigma_\text{out},
    \label{eq:ood_after_mean}
\end{align}
Hence:
\begin{equation}
    \label{eq:ood_reduction}
    \mathbb{E}_\text{out} [z_i - \bar{z}_i] = (1-\epsilon)^2 \phi\left(\frac{c - \mu}{(1-\epsilon)\sigma_\text{out}}\right) \cdot \sigma_\text{out} - (1 - \epsilon) \left[ 1 - \Phi \left( \frac{c - \mu}{(1-\epsilon)\sigma_\text{out}} \right) \right] \cdot (c - \mu),
\end{equation}
which recovers \autoref{eq:id_reduction} when $\epsilon = 0$ and $\sigma_\text{out} = \sigma_\text{in}$.

\begin{figure}[t]
\floatbox[{\capbeside\thisfloatsetup{capbesideposition={right,center},capbesidewidth=0.35\textwidth}}]{figure}[\FBwidth]
{\caption{Plot showing the relationship between the skewness parameter $\epsilon$ and the chaotic-ness parameter $\sigma$ on activation reduction after applying ReAct. The function $\mathbb{E}[z_i - \bar{z}_i]$ is increasing in both $-\epsilon$ and $\sigma$, which suggests that ReAct has a greater reduction effect for activation distributions with positive skewness ($\epsilon<0)$ and chaotic-ness---two signature characteristics of OOD activation.}\label{fig:act_reduction}}
{\vspace{-0ex}\includegraphics[width=0.6\textwidth]{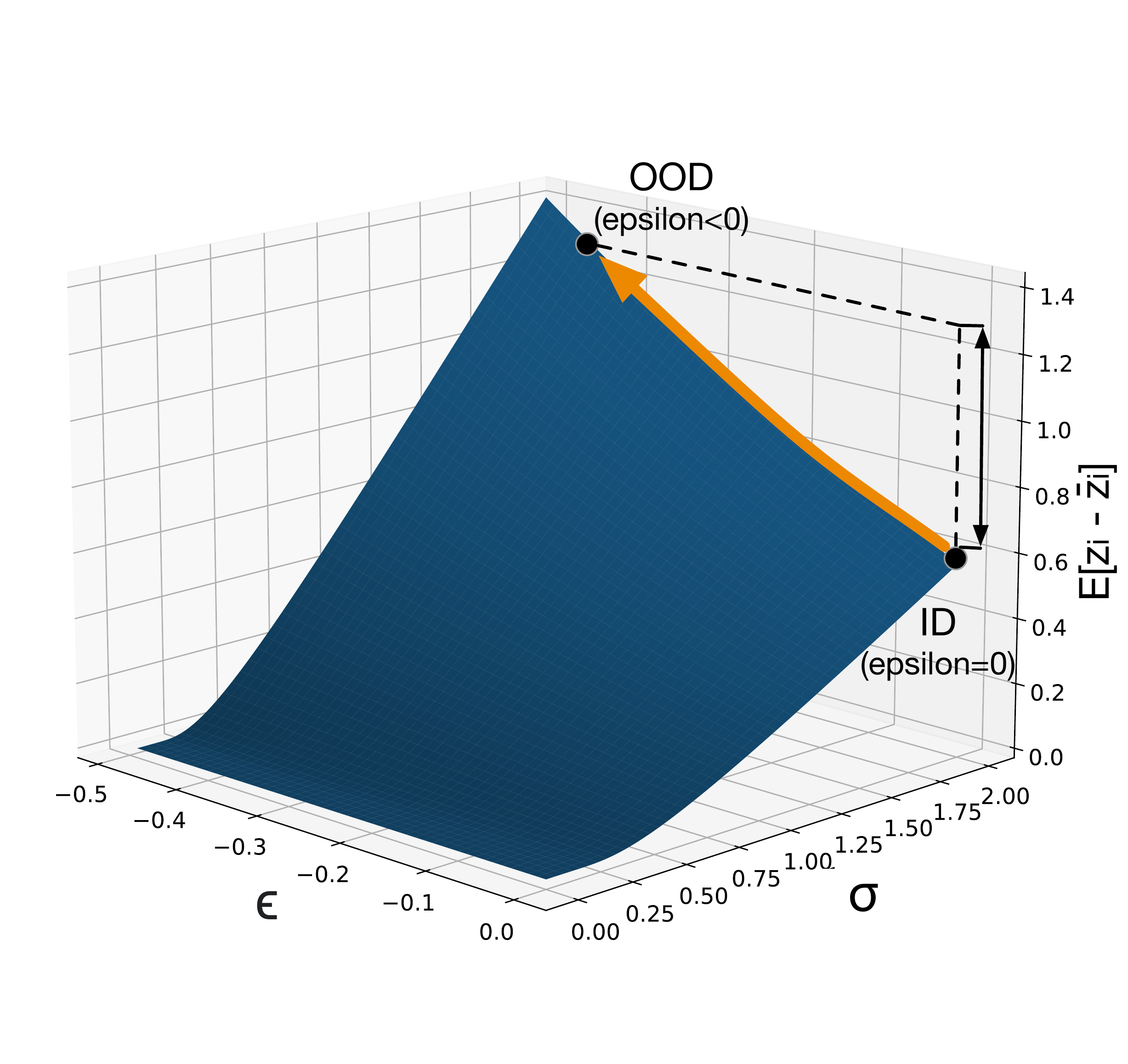}}
\vspace{-1cm}
\end{figure}

\paragraph{Remark 1: Activation reduction on OOD is more than ID.} \autoref{fig:act_reduction} shows a plot of $\mathbb{E}[z_i - \bar{z}_i]$ for $\mu = 0.5$ and $c = 1$. Observe that decreasing $\epsilon$ (more positive-skewness) or increasing $\sigma$ (more chaotic-ness) leads to a larger reduction in the mean activation after applying ReAct. For example, under the same $\sigma$, a larger $\mathbb{E}_\text{out} [z_i - \bar{z}_i] - \mathbb{E}_\text{in}  [z_i - \bar{z}_i]$ can be observed by the gap of $z$-axis value between $\epsilon=0$ and $\epsilon<0$ (\emph{e.g.}, $\epsilon=-0.4$). This suggests that rectification on average affects OOD activations more severely compared to ID activations. 

\paragraph{Remark 2: Output reduction on OOD is more than ID.} To derive the effect on the distribution of model output, consider output logits $f(\bz) = W \bz + \bb$ and assume without loss of generality that $W \mathbf{1} > 0$ element-wise. This can be achieved by adding a positive constant to $W$ without changing the output probabilities or classification decision. Let $\delta = \mathbb{E}_\text{out}[\bz - \bar{\bz}] - \mathbb{E}_\text{in}[\bz - \bar{\bz}] > 0$. Then:
\begin{align*}
    \mathbb{E}_\text{out}[f(\bz) - f(\bar{\bz})] & = \mathbb{E}_\text{out}[W (\bz - \bar{\bz})] = W \mathbb{E}_\text{out}[\bz - \bar{\bz}] = W \left( \mathbb{E}_\text{in}[\bz - \bar{\bz}] + \delta \mathbf{1} \right) \\ & = \mathbb{E}_\text{in}[W (\bz - \bar{\bz})] + \delta W \mathbf{1} \\ & > \mathbb{E}_\text{in}[f(\bz) - f(\bar{\bz})].
\end{align*}
Hence the increased separation between OOD and ID activations transfers to the output space as well. Note that the condition of $W \mathbf{1} > 0$ is sufficient but not necessary for this result to hold. In fact, our experiments in Section \ref{sec:experiments} do not require this condition. However, we verified empirically that ensuring $W \mathbf{1} > 0$ by adding a positive constant to $W$ and applying ReAct does confer benefits to OOD detection, which validates our theoretical analysis.

\paragraph{Why ReAct improves the OOD scoring functions?} Our theoretical analysis above shows that ReAct suppresses logit output for OOD data more so than for ID data. This means that for detection scores depending on the logit output (\emph{e.g.,} energy score~\cite{liu2020energy}), the gap between OOD and ID score will be enlarged after applying ReAct, which makes thresholding more capable of separating OOD and ID inputs; see Figure~\ref{fig:teaser}(a) and (c) for a concrete example showing this effect. 

\section{Discussion and Further Analysis}
\label{sec:discussion}
\textbf{Why do OOD samples trigger abnormal unit activation patterns?} So far we have shown that OOD data can trigger unit activation patterns that are significantly different from ID data, and that \methodAbbr can effectively alleviate this issue (empirically in Section~\ref{sec:experiments} and theoretically in Section~\ref{sec:theory}). Yet a question left in mystery is \emph{why} such a pattern occurs in modern neural networks? 
Answering this question requires carefully examining the internal mechanism by which the network is trained and evaluated. Here we provide one plausible explanation for the activation patterns observed in Figure~\ref{fig:teaser}, with the hope of shedding light for future research. 

Intriguingly, our analysis reveals an important insight that batch normalization (BatchNorm)~\cite{ioffe2015batch}---a common technique employed during model training---is in fact both a blessing (for ID classification) and a curse (for OOD detection). Specifically, for a unit activation denoted by $z$, the network estimates the running mean $\mathbb{E}_\text{in}(z)$ and variance $\text{Var}_\text{in}(z)$, over the entire ID training set during training. During inference time, the network applies BatchNorm statistics $\mathbb{E}_\text{in}(z)$ and $\text{Var}_\text{in}(z)$, which helps normalize the activations for the test data with the same distribution $\mathcal{D}_\text{in}$:
\begin{align}
    \text{BatchNorm}(z;\gamma, \beta, \epsilon) = \frac{z-\mathbb{E}_\text{in}[z]}{\sqrt{\text{Var}_\text{in}[z]+\epsilon}}\cdot \gamma + \beta
\end{align}
However, our key observation is that using \emph{mismatched} BatchNorm statistics---that are estimated on $\mathcal{D}_\text{in}$ yet blindly applied to the OOD $\mathcal{D}_\text{out}$---can trigger abnormally high unit activations. 
As a thought experiment, we instead apply the \emph{true} BatchNorm statistics estimated on a batch of OOD images and we observe well-behaved activation patterns with near-constant mean and standard deviations---just like the ones observed on the ID data  (see Figure~\ref{fig:trainval}, top). Our study therefore reveals one of the fundamental causes for neural networks to produce overconfident predictions for OOD data. After applying the true statistics (estimated on OOD), the output distributions between ID and OOD data become much more separable. While this thought experiment has shed some guiding light, the solution of estimating BatchNorm statistics on a batch of OOD data is not at all satisfactory and realistic. Arguably, it poses a strong and impractical assumption of having access to a batch of OOD data during test time. Despite its limitation, we view it as an oracle, which serves as an \emph{upper bound on performance} for ReAct. %

\begin{figure}[h]
	\begin{center}
	\vspace{-0.3cm}
		\includegraphics[width=0.9\linewidth]{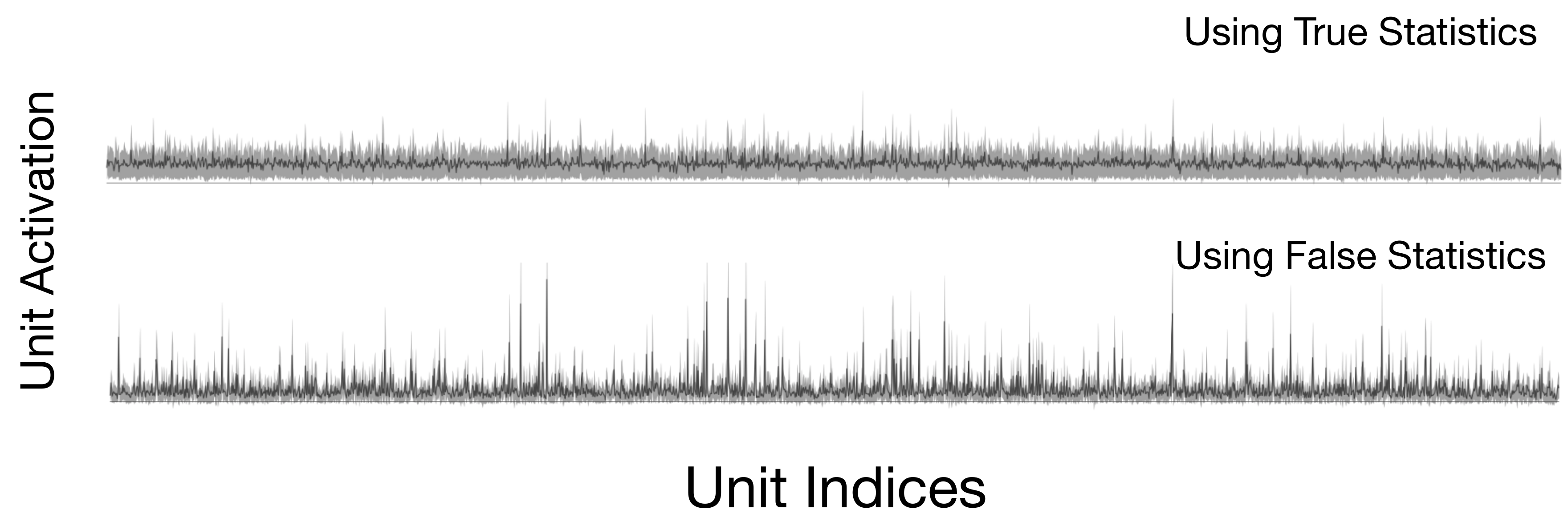}
	\end{center}
	\caption{\small The distribution of per-unit activations in the penultimate layer for OOD data (iNaturalist) by using \textit{true} (top) vs. \textit{mismatched} (bottom) BatchNorm statistics for OOD data.}
	\vspace{-0.4cm}
	\label{fig:trainval}
\end{figure}
\begin{table*}[h]
\centering
\scalebox{0.9}{
                \begin{tabular}{lcccc}
                        \toprule
       \textbf{Method}  &\textbf{iNaturalist}  & \textbf{Places}  & \textbf{SUN} & \textbf{Textures}   \\
 \hline
    
                     Oracle (batch OOD for estimating BN statistics) & 99.59 & 99.09 & 98.32 & 91.43\\
                         {ReAct} (single OOD) &  96.22 & 	94.20 & 	91.58 & 	89.80\\ 
                          No ReAct~\cite{liu2020energy} & 89.95   & 85.89      & 82.86       & 85.99       \\ 
 \bottomrule
 \end{tabular}}
        \caption[]{\small Comparison with oracle using OOD BN statistics. Model is trained on ImageNet (see Section~\ref{sec:imagenet}). Values are AUROC. }
        \label{tab:bn-results}
        \vspace{-0.3cm}
\end{table*}

In particular, results in Table~\ref{tab:bn-results} suggest that our method favorably matches the oracle performance using the ground truth BN statistics. This is encouraging as our method does not impose any batch assumption and can be feasible for \emph{single-input} testing scenarios. %

\paragraph{What about networks trained with different normalization mechanisms?} Going beyond batch normalization~\cite{ioffe2015batch}, we further investigate (1) whether networks trained with alternative normalization approaches exhibit similar activation patterns, and (2) whether ReAct is helpful there. To answer this question, we additionally evaluate networks trained with  WeightNorm~\cite{salimans2016weight} and GroupNorm~\cite{wu2018group}---two other well-known normalization methods. As shown in Figure~\ref{fig:abnormal}, the unit activations also display highly distinctive signature patterns between ID and OOD data, with more chaos on OOD data. In all cases, the networks are trained to adapt to the ID data, resulting in abnormal activation signatures on OOD data in testing. Unlike BatchNorm, there is no easy oracle (\emph{e.g.}, re-estimating the statistics on OOD data) to counteract the ill-fated normalizations.

We apply \methodAbbr on models trained with WeightNorm and GroupNorm, and report results in Table~\ref{tab:norm-ablation}. Our results suggest that \methodAbbr is consistently effective under various normalization schemes. For example, \methodAbbr reduces the average FPR95 by \textbf{23.54}\% and \textbf{14.7}\% respectively. 
Overall, \methodAbbr has shown broad efficacy and compatibility with different OOD scoring functions~(Section~\ref{sec:common_benchmark}).

\begin{figure}[h]
	\begin{center}
		\includegraphics[width=0.90\linewidth]{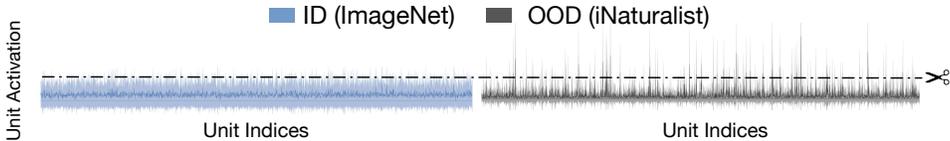}
	\end{center}
    \vspace{-0.4cm}
	\caption{\small The distribution of per-unit activations in the penultimate layer for ID (ImageNet~\cite{deng2009imagenet}) and OOD (iNaturalist~\cite{inat}) on model trained with weight normalization~\cite{salimans2016weight}.} %
	\label{fig:abnormal}
\end{figure}

\begin{table}[h]
\centering
\scalebox{0.72}{
\begin{tabular}{llcccccccccc}
    \toprule
\multicolumn{1}{c}{\multirow{4}{*}{\textbf{}}} & \multicolumn{1}{c}{\multirow{4}{*}{\textbf{Methods}}} & \multicolumn{8}{c}{\textbf{OOD Datasets}}                                                                                      & \multicolumn{2}{c}{\multirow{2}{*}{\textbf{Average}}}  \\ \cline{3-10}
\multicolumn{1}{c}{}                       & \multicolumn{1}{c}{}                         & \multicolumn{2}{c}{\textbf{iNaturalist}} & \multicolumn{2}{c}{\textbf{SUN}} & \multicolumn{2}{c}{\textbf{Places}} & \multicolumn{2}{c}{\textbf{Textures}} & \multicolumn{2}{c}{}                       \\
\multicolumn{1}{c}{}                       & \multicolumn{1}{c}{}                         & FPR95          & AUROC          & FPR95      & AUROC      & FPR95        & AUROC       & FPR95         & AUROC        & FPR95                 & AUROC                        \\
\multicolumn{1}{c}{}                       & \multicolumn{1}{c}{}                         &        \multicolumn{1}{c}{$\downarrow$}         &     \multicolumn{1}{c}{$\uparrow$}            &      \multicolumn{1}{c}{$\downarrow$}       &    \multicolumn{1}{c}{$\uparrow$}   &        \multicolumn{1}{c}{$\downarrow$}       &    \multicolumn{1}{c}{$\uparrow$}    &       \multicolumn{1}{c}{$\downarrow$}         &    \multicolumn{1}{c}{$\uparrow$}    &         \multicolumn{1}{c}{$\downarrow$}               &       \multicolumn{1}{c}{$\uparrow$}                       \\    \hline

\multirow{2}{*}{GroupNorm} & w.o. ReAct & 65.38 & 88.45 & 65.11 & 85.52 & 65.46 & 84.34 & 69.17 & 83.22 & 66.28 & 85.38 \\
  & w/ ReAct    & \textbf{39.45} & \textbf{92.95} & \textbf{51.57} & \textbf{87.90} & \textbf{52.78} & \textbf{87.32} & \textbf{62.50} & \textbf{81.76} & \textbf{51.58} & \textbf{87.48} \\ \midrule
\multirow{2}{*}{WeightNorm}  & w.o. ReAct & 40.71 & 92.52 & 48.07 & 89.39 & 50.92 & 87.87 & 61.65 & 80.71 & 50.34 & 87.62 \\
  &  w/ ReAct    &\textbf{19.73} & \textbf{95.91} & \textbf{31.39} & \textbf{93.21} & \textbf{42.34} & \textbf{88.94} & \textbf{13.74} & \textbf{96.98} & \textbf{26.80} & \textbf{93.76} \\ \bottomrule 
\end{tabular}
}   
        \vspace{-0.2cm}
        \caption[]{\small \textbf{Effectiveness of \methodAbbr for different normalization methods.} ReAct consistently improves OOD detection performance for the model trained with GroupNorm and WeightNorm. In-distribution is ImageNet-1k dataset. \textbf{Bold} numbers are superior results.  }
        \label{tab:norm-ablation}
\end{table}

\vspace{-0.1cm}
\section{Related Work}
\label{sec:related_work}
\vspace{-0.2cm}

\paragraph{OOD uncertainty estimation with discriminative models.} The problem of classification with rejection can date back to early works on abstention~\cite{chow1970optimum, fumera2002support}, which considered simple model families such as SVMs~\cite{cortes1995support}. A comprehensive survey on OOD detection can be found in~\cite{yang2021oodsurvey}. 
The phenomenon of neural networks' overconfidence in out-of-distribution data is first revealed by \citeauthor{nguyen2015deep}~\cite{nguyen2015deep}.
Early works attempted to improve the OOD uncertainty estimation by proposing the ODIN score~\citep{hsu2020generalized, liang2018enhancing}, OpenMax score~\citep{bendale2016towards}, and Mahalanobis
distance-based score~\citep{lee2018simple}. 
Recent work by \citeauthor{liu2020energy}~\citep{liu2020energy} proposed using an energy score for OOD uncertainty estimation, which can be easily derived from a discriminative classifier and demonstrated advantages over the softmax confidence score both empirically and theoretically. ~\citeauthor{wang2021canmulti}~\cite{wang2021canmulti} further showed an energy-based approach  can improve OOD uncertainty estimation for multi-label classification networks. \citeauthor{huang2021mos}~\cite{huang2021mos} revealed that approaches developed for common CIFAR benchmarks might not translate effectively into a large-scale ImageNet benchmark, highlighting the need to evaluate OOD uncertainty estimation in a large-scale real-world setting. Previous methods primarily derive OOD scores using original activations. In contrast, our work is motivated by a novel analysis of internal activations, and shows that rectifying activations is a surprisingly effective approach for OOD detection.

\paragraph{Neural network activation analysis.}
Neural networks have been studied at the granularity of the activation of individual layers~\cite{bn2015pmlr, morcos2018iclr, jason2014nips, zhou2018revisiting}, or individual networks~\cite{li2015convergent}. In particular, ~\citeauthor{li2015convergent}~\citep{li2015convergent} studied the similarity of activation space between two independently trained neural networks. Previously,  ~\citeauthor{hein2019cvpr}~\cite{hein2019cvpr} showed that neural networks with ReLU activation can lead to arbitrary high activation for inputs far away from the training data. We show that using \methodAbbr could efficiently alleviate this undesirable phenomenon. \methodAbbr does not rely on auxiliary data and can be conveniently used for pre-trained models. The idea of rectifying unit activation~\cite{relu62010}, which is known as Relu6, was used to facilitate the learning of sparse features. In this paper, we first show that rectifying activation can drastically alleviate the overconfidence issue for OOD data, and as a result, improve OOD detection.

\paragraph{OOD uncertainty estimation with generative models.} ~ Alternative approaches for detecting OOD inputs resort to generative models that directly estimate density~\cite{dinh2017density,kingma2014autoencoding, van2016conditional,rezende2014stochastic,tabak2013family, huang2017stacked}. An input is deemed as OOD if it lies in the low-likelihood regions.
A plethora of literature has emerged to utilize generative models for OOD detection~\cite{ kirichenko2020normalizing,ren2019likelihood, schirrmeister2020understanding,serra2019input,wang2020further,xiao2020likelihood}.
Interestingly, \citeauthor{Eric2019ganknow}~\cite{Eric2019ganknow} showed that deep generative models can assign a high likelihood to OOD data.
Moreover, generative models can be prohibitively challenging to train and optimize, and the performance can often lag behind the discriminative counterpart. In contrast, our method relies on a discriminative classifier, which is easier to optimize and achieves stronger performance.

\paragraph{Distributional shifts.} Distributional shifts have attracted increasing research interests~\citep{koh2021wilds}. It is important to recognize and differentiate various types of distributional shift problems. Literature in OOD detection is commonly concerned about model reliability and detection of label-space shifts, where the OOD inputs have disjoint labels \emph{w.r.t.} ID data and therefore \emph{should not be predicted by the model}. Meanwhile, some works considered {label distribution} shift~\citep{saerens2002adjusting, lipton2018detecting, shrikumar2019calibration, azizzadenesheli2019regularized, alexandari2020maximum, wu2021online}, where the label space is common between ID and OOD but the marginal label distribution changes, as well as covariate shift in the input space~\citep{hendrycks2019benchmarking, ovadia2019can}, where inputs can be corruption-shifted or domain-shifted~\citep{sun2020test, hsu2020generalized}.
It is important to note that our work focuses on the detection of shifts where the label space $\mathcal{Y}$ is different between ID and OOD data and hence the model should not make any prediction, instead of label distribution shift and covariate shift where the model is expected to \emph{generalize}.
\section{Conclusion}
\label{sec:conclusion}

    This paper provides a simple activation rectification strategy termed ReAct, which truncates the high activations during test time for  OOD detection. We provide both empirical and theoretical insights characterizing and explaining the mechanism by which \methodAbbr improves OOD uncertainty estimation. By rectifying the activations, the outsized contribution of hidden units on OOD output can be attenuated, resulting in a stronger separability from ID data. Extensive experiments show \methodAbbr can significantly improve the performance of OOD detection on both common benchmarks and large-scale image classification models. Applications of multi-class classification can benefit from our method, and we anticipate further research in OOD detection to extend this work to tasks beyond image classification. We hope that our insights inspire future research to further examine the internal mechanisms of neural networks for OOD detection. %

    \section{Societal Impact}
\label{sec:broad}
Our project aims to improve the dependability and trustworthiness of modern machine learning models.
This stands to benefit a wide range of fields and societal activities. We believe out-of-distribution uncertainty estimation is an increasingly critical component of systems that range from consumer and business applications (\emph{e.g.}, digital content understanding) to transportation (\emph{e.g.}, driver assistance systems and autonomous vehicles), and to health care (\emph{e.g.}, unseen disease identification). Many of these applications require classification models in operation.\@
Through this work and by releasing our code, we hope to provide machine learning researchers with a new methodological perspective and offer machine learning practitioners a plug-and-play tool that renders safety against OOD data in the real world. While we do not anticipate any negative consequences to our work, we hope to continue to build on our method in future work. 

\section*{Acknowledgement}
YS and YL are supported by the Office of the Vice Chancellor for Research and Graduate Education (OVCRGE) with funding from the Wisconsin Alumni Research Foundation (WARF).
 
 \newpage

\bibliographystyle{plainnat}
\bibliography{neurips_2021}

\clearpage
\appendix
\section{Theoretical Details}
\label{sec:theory_details}

Here we derive \autoref{eq:ood_reduction} for $\epsilon \geq 0$ and $\sigma_\text{out} = \sigma > 0$. Since $\text{ESN}(\mu, \sigma^2, 0) = \mathcal{N}^R(\mu, \sigma)$, we can obtain \autoref{eq:id_reduction} for ID activation by specializing the result to $\epsilon = 0$. We begin with a useful lemma.

\begin{lemma}
\label{lem:esn_prob}
Let $X \sim \text{ESN}(0, \sigma^2, \epsilon)$ and let $a \leq b \leq 0$, $0 \leq c \leq d$. Then $\mathbb{P}(a \leq X \leq b) = (1+\epsilon) \left[ \Phi \left( \frac{b}{(1+\epsilon) \sigma} \right) - \Phi \left( \frac{a}{(1+\epsilon) \sigma} \right) \right]$ and $\mathbb{P}(c \leq X \leq d) = (1-\epsilon) \left[ \Phi \left( \frac{d}{(1-\epsilon) \sigma} \right) - \Phi \left( \frac{c}{(1-\epsilon) \sigma} \right) \right]$.
\end{lemma}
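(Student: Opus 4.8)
The plan is to prove both identities by direct integration of the ESN density in \eqref{eq:esn} specialized to $\mu = 0$, exploiting the fact that each interval lies entirely within one of the two branches of the piecewise density. Recall that for $X \sim \esn(0, \sigma^2, \epsilon)$ the density equals $\phi\!\left(\frac{x}{\sigma(1+\epsilon)}\right)/\sigma$ for $x < 0$ and $\phi\!\left(\frac{x}{\sigma(1-\epsilon)}\right)/\sigma$ for $x \geq 0$. The whole argument is a one-line change of variables on each branch.

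For the first identity, since $a \leq b \leq 0$ the interval $[a,b]$ sits in the left branch (the single boundary point $x=0$, if $b=0$, has measure zero and contributes nothing), so I would write
\begin{equation*}
\mathbb{P}(a \leq X \leq b) = \int_a^b \frac{1}{\sigma}\phi\!\left(\frac{x}{\sigma(1+\epsilon)}\right)\,dx.
\end{equation*}
Then apply the linear substitution $u = \frac{x}{\sigma(1+\epsilon)}$, whose Jacobian contributes a factor $\sigma(1+\epsilon)$; the $1/\sigma$ cancels and leaves the prefactor $(1+\epsilon)$, while the limits become $\frac{a}{\sigma(1+\epsilon)}$ and $\frac{b}{\sigma(1+\epsilon)}$. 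Since the antiderivative of $\phi$ is $\Phi$, this yields exactly the claimed expression. The second identity is symmetric: because $0 \leq c \leq d$ the interval $[c,d]$ lies in the right branch, and the same substitution with scale $\sigma(1-\epsilon)$ produces the prefactor $(1-\epsilon)$ and the arguments $\frac{c}{\sigma(1-\epsilon)}$ and $\frac{d}{\sigma(1-\epsilon)}$.

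There is essentially no obstacle here; the only thing I would be careful about is bookkeeping of the scale factors. The ESN density uses $\sigma(1\pm\epsilon)$ as the effective standard deviation on each side, and it is precisely the Jacobian of the rescaling to a standard normal that generates the multiplicative $(1\pm\epsilon)$ out front. I would also note that this lemma is the natural workhorse for the subsequent derivation of \eqref{eq:ood_reduction}: the composition $z_i = \max(x_i,0)$ followed by the truncation $\bar z_i = \min(z_i, c)$ partitions the line into the negative region, the interval $[0,c]$, and an atom at $c$, and each piece is evaluated by applying Lemma~\ref{lem:esn_prob} (or, for the ID case, its $\epsilon = 0$ specialization, which recovers the rectified Gaussian) to the relevant sub-interval.
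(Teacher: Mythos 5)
Your proof is correct and is essentially the same argument as the paper's: both integrate the left (resp.\ right) branch of the ESN density over the interval and pull out the factor $(1\pm\epsilon)$ by recognizing a $\calN(0,(1\pm\epsilon)^2\sigma^2)$ density, which is just your change of variables written differently. No gaps.
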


\begin{proof}
\begin{align*}
    \mathbb{P}(a \leq X \leq b) &= (1+\epsilon) \int_{x = a}^b \frac{1}{(1+\epsilon) \sigma} \phi \left( \frac{x}{(1+\epsilon) \sigma} \right) dx \\
    &= (1+\epsilon) \left[ \Phi \left( \frac{b}{(1+\epsilon) \sigma} \right) - \Phi \left( \frac{a}{(1+\epsilon) \sigma} \right) \right]
\end{align*}
since the integral that of a $\calN(0, (1+\epsilon)^2 \sigma^2)$ distribution between $a$ and $b$. The result for $\mathbb{P}(c \leq X \leq d)$ follows analogously.
\end{proof}

Suppose that $X_\mu \sim \text{ESN}(\mu, \sigma^2, \epsilon)$ with $\mu > 0$ and let $Z_\mu = \max(X_\mu, 0)$. Define $X = X_\mu - \mu$ so that $Z_\mu = \max(X + \mu, 0) = \max(X, -\mu) + \mu$. We can derive the expectation of $Z := \max(X, -\mu)$:
\begin{align*}
    \mathbb{E}[Z] &= \underbrace{-\mu \cdot \mathbb{P}(X < -\mu)}_{\text{(I)}} + \underbrace{\int_{x = -\mu}^0 \frac{x}{\sigma} \phi \left( \frac{x}{(1+\epsilon) \sigma} \right) dx}_{\text{(II)}} + \underbrace{\int_{x = 0}^\infty \frac{x}{\sigma} \phi \left( \frac{x}{(1-\epsilon) \sigma} \right) dx}_{\text{(III)}}. \\
    \text{(I)} &= -\mu (1+\epsilon)\Phi \left(\frac{-\mu}{(1+\epsilon) \sigma}\right) \quad \text{by Lemma \ref{lem:esn_prob}}; \\
    \text{(II)} &= (1 + \epsilon) \int_{x = -\mu}^0 \frac{x}{1 + \epsilon) \sigma} \phi \left( \frac{x}{(1+\epsilon) \sigma} \right) dx = (1 + \epsilon)^2 \left[ \phi \left( \frac{-\mu}{(1+\epsilon) \sigma} \right) - \phi(0) \right] \sigma
\end{align*}
since the integral is the expectation of an un-normalized truncated Gaussian between $-\mu$ and $0$. Similarly, (III) is $(1-\epsilon)$ times the expectation of an un-normalized Gaussian between $0$ and $\infty$, thus $\text{(III)} = (1 - \epsilon)^2 \phi(0) \sigma$. Combining (I)-(III) gives:
\begin{align*}
    \mathbb{E}[Z] &= -\mu (1+\epsilon)\Phi \left(\frac{-\mu}{(1+\epsilon) \sigma}\right) + (1 + \epsilon)^2 \left[ \phi \left( \frac{-\mu}{(1+\epsilon) \sigma} \right) - \phi(0) \right] \sigma + (1 - \epsilon)^2 \phi(0) \sigma \\
    &= -\mu (1+\epsilon)\Phi \left(\frac{-\mu}{(1+\epsilon) \sigma}\right) + (1 + \epsilon)^2 \phi \left( \frac{-\mu}{(1+\epsilon) \sigma} \right) \sigma + \phi(0) \sigma [(1-\epsilon)^2 - (1 + \epsilon)^2] \\
    &= -\mu (1+\epsilon)\Phi \left(\frac{-\mu}{(1+\epsilon) \sigma}\right) + (1 + \epsilon)^2 \phi \left( \frac{-\mu}{(1+\epsilon) \sigma} \right) \sigma - 4 \epsilon \phi(0) \sigma \\
    &= -\mu (1+\epsilon)\Phi \left(\frac{-\mu}{(1+\epsilon) \sigma}\right) + (1 + \epsilon)^2 \phi \left( \frac{-\mu}{(1+\epsilon) \sigma} \right) \sigma - \frac{4 \epsilon}{\sqrt{2 \pi}} \sigma.
\end{align*}
\autoref{eq:ood_before_mean} follows since $\mathbb{E}[Z_\mu] = \mathbb{E}[Z] + \mu$. To derive \autoref{eq:ood_after_mean}, note that the expectation of $\bar{Z} := \min(Z, c - \mu)$ is given by:
\begin{equation*}
    \mathbb{E}[\bar{Z}] = \text{(I)} + \text{(II)} + \underbrace{\int_{x = 0}^{c - \mu} \frac{x}{\sigma} \phi \left( \frac{x}{(1-\epsilon) \sigma} \right) dx}_{\text{(IV)}} + \underbrace{(c - \mu) \cdot \mathbb{P}(X > c - \mu)}_{\text{(V)}}.
\end{equation*}
$\text{(IV)} = (1 - \epsilon)^2 \left[ \phi(0) - \phi \left( \frac{c-\mu}{(1-\epsilon) \sigma} \right) \right] \sigma$ follows from a similar argument as above, and
$$\text{(V)} = (c - \mu) (1 - \epsilon) \left[ 1 - \Phi \left( \frac{c - \mu}{(1 + \epsilon) \sigma} \right) \right]$$
can be derived using Lemma \ref{lem:esn_prob}. Combining (I),(II),(IV),(V) and observing that $\mathbb{E}[\min(Z_\mu, c)] = \mathbb{E}[\bar{Z}] + \mu$ gives \autoref{eq:ood_after_mean}.

\section{Descriptions of Baseline Methods}
\label{sec:baseline}

For the reader's convenience, we summarize in detail a few common techniques for defining OOD scores that measure the degree of ID-ness on the given sample. All the methods derive the score \emph{post hoc} on neural networks trained with in-distribution data only. By convention, a higher score is indicative of being in-distribution, and vice versa.

\paragraph{Softmax score} One of the earliest works on OOD detection considered using the maximum softmax probability (MSP) to distinguish between $\Din$ and $\Dout$~\cite{Kevin}. In detail, suppose the label space is $\calY = \{1,\ldots,K\}$. We assume the classifier $f$ is defined in terms of a feature extractor $f : \calX \rightarrow \mathbb{R}^m$ and a linear multinomial regressor with weight matrix $W \in \mathbb{R}^{K \times m}$ and bias vector $\bb \in \mathbb{R}^K$. The prediction probability for each class is given by:
\begin{equation}
    P(y = k \mid \bx) = \softmax(W h(\bx) + \bb)_k.
\end{equation}
The softmax score is defined as $S_\mathrm{MSP}(\bx; f) := \max_k P(y = k \mid \bx)$.

\textbf{ODIN score} Liang et al.~\cite{liang2018enhancing} introduced the ODIN score which  incorporates temperature scaling and input preprocessing to improve the separation of MSP for ID and OOD data.
\begin{equation}
    P(y = k \mid \mathbf{\tilde x}) = \softmax[ (W h(\mathbf{\tilde x}) + \bb)/T]_k,
\end{equation}
where $\mathbf{\tilde x}$ is the perturbed input. The ODIN score is defined as $S_\mathrm{ODIN}(\bx; f) := \max_k P(y = k \mid \mathbf{\tilde x})$.

\paragraph{Energy score} The energy function~\cite{liu2020energy} maps the logit outputs to a scalar $S_\mathrm{Energy}(\bx; f) \in \mathbb{R}$, which is relatively lower for ID data:
\begin{align}
\label{eq:energy}
S_\mathrm{Energy}(\bx; f)& =-\log \sum_{k=1}^{K}\exp(\*w_i^\top h(\bx) + b_i).
\end{align}
Note that Liu et al.~\cite{liu2020energy} used the \emph{negative energy score} for OOD detection, in order to align with the convention that $S(\*x;f)$ is higher for ID data and vice versa.

\paragraph{Mahalanobis distance} All the scores above operate on $K$-dimensional outputs of the multinomial regressor. 
In \cite{lee2018simple}, the authors instead  model the feature-level distribution as a mixture of class conditional Gaussians, and propose the \emph{Mahalanobis distance-based score}:
\begin{equation}
    S_\mathrm{Mahalanobis}(\bx; h) := \max_k - (h(\bx) - \hat{\mu}_k)^\top \hat{\Sigma} (h(\bx) - \hat{\mu}_k),
\end{equation}
where $\hat{\mu}_k$ is the estimated feature vector mean for class $k$ and $\hat{\Sigma}$ is the estimated covariance matrix (tied across classes). %

\section{Selected Categories in OOD Datasets}
\label{sec:ood-categories}
We use the following list of concepts as OOD test data, selected from the iNaturalist \cite{inat}, SUN \cite{sun}, and Places365 \cite{zhou2017places} datasets. The concepts are curated to be disjoint from the ImageNet-1k labels~\cite{huang2021mos}. For Textures~\cite{cimpoi2014describing}, we use the entire dataset.  

\paragraph{iNaturalist}
\textit{Coprosma lucida, Cucurbita foetidissima, Mitella diphylla, Selaginella bigelovii, Toxicodendron vernix, Rumex obtusifolius, Ceratophyllum demersum, Streptopus amplexifolius, Portulaca oleracea, Cynodon dactylon, Agave lechuguilla, Pennantia corymbosa, Sapindus saponaria, Prunus serotina, Chondracanthus exasperatus, Sambucus racemosa, Polypodium vulgare, Rhus integrifolia, Woodwardia areolata, Epifagus virginiana, Rubus idaeus, Croton setiger, Mammillaria dioica, Opuntia littoralis, Cercis canadensis, Psidium guajava, Asclepias exaltata, Linaria purpurea, Ferocactus wislizeni, Briza minor, Arbutus menziesii, Corylus americana, Pleopeltis polypodioides, Myoporum laetum, Persea americana, Avena fatua, Blechnum discolor, Physocarpus capitatus, Ungnadia speciosa, Cercocarpus betuloides, Arisaema dracontium, Juniperus californica, Euphorbia prostrata, Leptopteris hymenophylloides, Arum italicum, Raphanus sativus, Myrsine australis, Lupinus stiversii, Pinus echinata, Geum macrophyllum, Ripogonum scandens, Echinocereus triglochidiatus, Cupressus macrocarpa, Ulmus crassifolia, Phormium tenax, Aptenia cordifolia, Osmunda claytoniana, Datura wrightii, Solanum rostratum, Viola adunca, Toxicodendron diversilobum, Viola sororia, Uropappus lindleyi, Veronica chamaedrys, Adenocaulon bicolor, Clintonia uniflora, Cirsium scariosum, Arum maculatum, Taraxacum officinale officinale, Orthilia secunda, Eryngium yuccifolium, Diodia virginiana, Cuscuta gronovii, Sisyrinchium montanum, Lotus corniculatus, Lamium purpureum, Ranunculus repens, Hirschfeldia incana, Phlox divaricata laphamii, Lilium martagon, Clarkia purpurea, Hibiscus moscheutos, Polanisia dodecandra, Fallugia paradoxa, Oenothera rosea, Proboscidea louisianica, Packera glabella, Impatiens parviflora, Glaucium flavum, Cirsium andersonii, Heliopsis helianthoides, Hesperis matronalis, Callirhoe pedata, Crocosmia × crocosmiiflora, Calochortus albus, Nuttallanthus canadensis, Argemone albiflora, Eriogonum fasciculatum, Pyrrhopappus pauciflorus, Zantedeschia aethiopica, Melilotus officinalis, Peritoma arborea, Sisyrinchium bellum, Lobelia siphilitica, Sorghastrum nutans, Typha domingensis, Rubus laciniatus, Dichelostemma congestum, Chimaphila maculata, Echinocactus texensis}

\paragraph{SUN}
\textit{badlands, bamboo forest, bayou, botanical garden, canal (natural), canal (urban), catacomb, cavern (indoor), cornfield, creek, crevasse, desert (sand), desert (vegetation), field (cultivated), field (wild), fishpond, forest (broadleaf), forest (needle leaf), forest path, forest road, hayfield, ice floe, ice shelf, iceberg, islet, marsh, ocean, orchard, pond, rainforest, rice paddy, river, rock arch, sky, snowfield, swamp, tree farm, trench, vineyard, waterfall (block), waterfall (fan), waterfall (plunge), wave, wheat field, herb garden, putting green, ski slope, topiary garden, vegetable garden, formal garden}

\paragraph{Places}
\textit{badlands, bamboo forest, canal (natural), canal (urban), cornfield, creek, crevasse, desert (sand), desert (vegetation), desert road, field (cultivated), field (wild), field road, forest (broadleaf), forest path, forest road, formal garden, glacier, grotto, hayfield, ice floe, ice shelf, iceberg, igloo, islet, japanese garden, lagoon, lawn, marsh, ocean, orchard, pond, rainforest, rice paddy, river, rock arch, ski slope, sky, snowfield, swamp, swimming hole, topiary garden, tree farm, trench, tundra, underwater (ocean deep), vegetable garden, waterfall, wave, wheat field}

\section{Hardware}
\label{sec:software}
All experiments are conducted on NVIDIA GeForce RTX 2080Ti GPUs.

\section{Ablation Study on Different Layers }
\label{sec:diff_layers}
We provide the activation patterns for intermediate layers in Figure~\ref{fig:diff_layers} and the OOD detection performance of applying \methodAbbr to these layers in Table~\ref{tab:diff_layers}. In particular, there are four residual blocks in the original ResNet-50 network~\cite{he2016identity}. The four layers (denoted by \texttt{layer 1} - \texttt{layer 4}) are taken from the output of each residual block. Interestingly, early layers display less distinctive signatures between ID and OOD data and \methodAbbr performs worse than the baseline~\cite{liu2020energy} when it is applied on \texttt{layer 1} - \texttt{layer 3}. This is expected because neural networks generally capture lower-level features in early layers (such as Gabor filters~\cite{zeiler2014visualizing}), whose activations can be very similar between ID and OOD. The semantic-level features only emerge as with deeper layers, where \methodAbbr is the most effective. 

\begin{figure}[h]
	\begin{center}
		\includegraphics[width=0.98\linewidth]{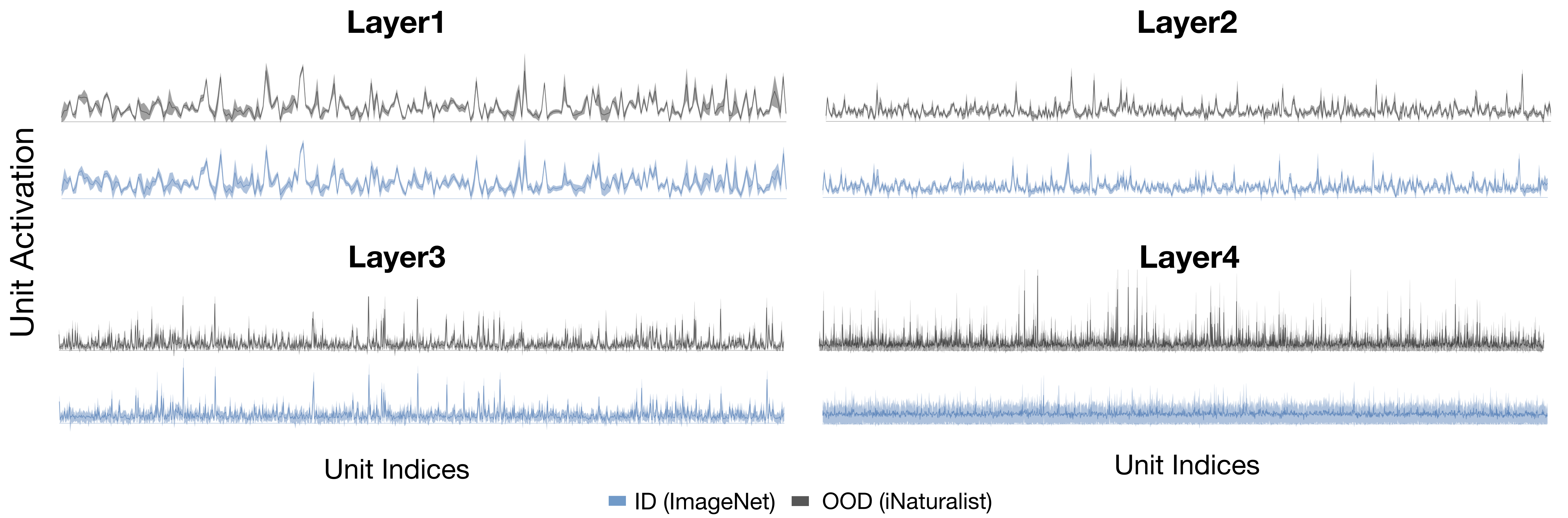}
	\end{center}
	\caption{\small The distribution of per-unit activations in the penultimate layer for ID (ImageNet, blue) and OOD (iNaturalist, gray) on different layers. 
	Each layer (denoted by \texttt{layer 1} - \texttt{layer 4})  corresponds to the output of each residual block in ResNet-50~\cite{he2016identity}. 
	}
	\vspace{-0.4cm}
	\label{fig:diff_layers}
\end{figure}

\begin{table}[h]
\scalebox{0.85}{
\begin{tabular}{lllll}
\toprule
\multirow{2}{*}{Layers of applying ReAct} & \multicolumn{2}{l}{ID: CIFAR-100} & \multicolumn{2}{l}{ID: ImageNet} \\ \cline{2-5}
 & FPR95 $\downarrow$ & AUROC $\uparrow$ & FPR95 $\downarrow$ & AUROC $\uparrow$ \\
\midrule
Layer1 & 90.86 & 68.17 & 84.83 & 74.88 \\
Layer2 & 84.12 & 75.32 & 76.25 & 79.37 \\
Layer3 & 73.4 & 80.91 & 63.87 & 86.46 \\
Layer4 (ReAct) & \textbf{59.61} & \textbf{87.48} & \textbf{31.43} & \textbf{92.95} \\
\midrule
No ReAct~\citep{liu2020energy} & 71.93 & 82.82 & 58.41 & 86.17 \\ \bottomrule

\end{tabular}}
\caption{ \small Ablation study of applying \methodAbbr on different layers. We used ResNet-18~\cite{he2016deep} pre-trained on CIFAR-100 and ResNet-50 pre-trained on ImageNet. $\uparrow$ indicates larger values are better and $\downarrow$ indicates smaller values are better. All values are percentaged over multiple OOD test datasets described in Section~\ref{sec:common_benchmark} and Section~\ref{sec:imagenet}.}
\label{tab:diff_layers}
\end{table}

\section{Using True BatchNorm Statistics on OOD Data}
\label{sec:bn_ood}
Typically, for a unit activation denoted by $z$, the network estimates the running mean $\mathbb{E}_\text{in}(z)$ and variance $\text{Var}_\text{in}(z)$, over the entire ID training set during training. During inference time, the network applies BatchNorm statistics~\cite{ioffe2015batch} $\mathbb{E}_\text{in}(z)$ and $\text{Var}_\text{in}(z)$, which helps normalize the activations for the test data with the same distribution $\mathcal{D}_\text{in}$:
\begin{align}
    \text{BatchNorm}(z;\gamma, \beta, \epsilon) = \frac{z-\mathbb{E}_\text{in}[z]}{\sqrt{\text{Var}_\text{in}[z]+\epsilon}}\cdot \gamma + \beta
\end{align}

However, our key observation is that using \emph{mismatched} BatchNorm statistics---that are estimated on $\mathcal{D}_\text{in}$ yet blindly applied to the OOD $\mathcal{D}_\text{out}$---can trigger abnormally high unit activations (see bottom of Figure~\ref{fig:trainval}). 
As a thought experiment, for OOD data, we instead apply the \emph{true} BatchNorm statistics estimated on a batch of OOD images:
\begin{align}
    \text{BatchNorm}(z;\gamma, \beta, \epsilon) = \frac{z-\mathbb{E}_\text{out}[z]}{\sqrt{\text{Var}_\text{out}[z]+\epsilon}}\cdot \gamma + \beta.
\end{align}
As a result, we observe well-behaved activation patterns with near-constant mean and standard deviations (see the top of Figure~\ref{fig:trainval}). Our study therefore reveals one of the fundamental causes for neural networks to produce overconfident predictions for OOD data. Despite the interesting observation, we note that estimating the true BN statistics for OOD poses a strong and impractical assumption of having access to a batch of OOD data during test time. In contrast, using \methodAbbr does not operate under such an assumption and can be applied on any single OOD instance, as well as for neural networks trained with alternative normalization mechanisms (as we show in Section~\ref{sec:discussion}).

\section{Unit Activation Patterns for Gaussian Noise}
\label{sec:noise_act}
We provide the activation patterns for Gaussian noise input (our validation data) in Figure~\ref{fig:noise_act}. The experiment is based on ResNet-50 architecture~\cite{he2016identity}. We show that using Gaussian noise as input can lead to overly high unit activations, which is consistent with the observation in Figure~\ref{fig:teaser} and Figure~\ref{fig:abnormal}. 

\begin{figure}[h]
	\begin{center}
		\includegraphics[width=0.7\linewidth]{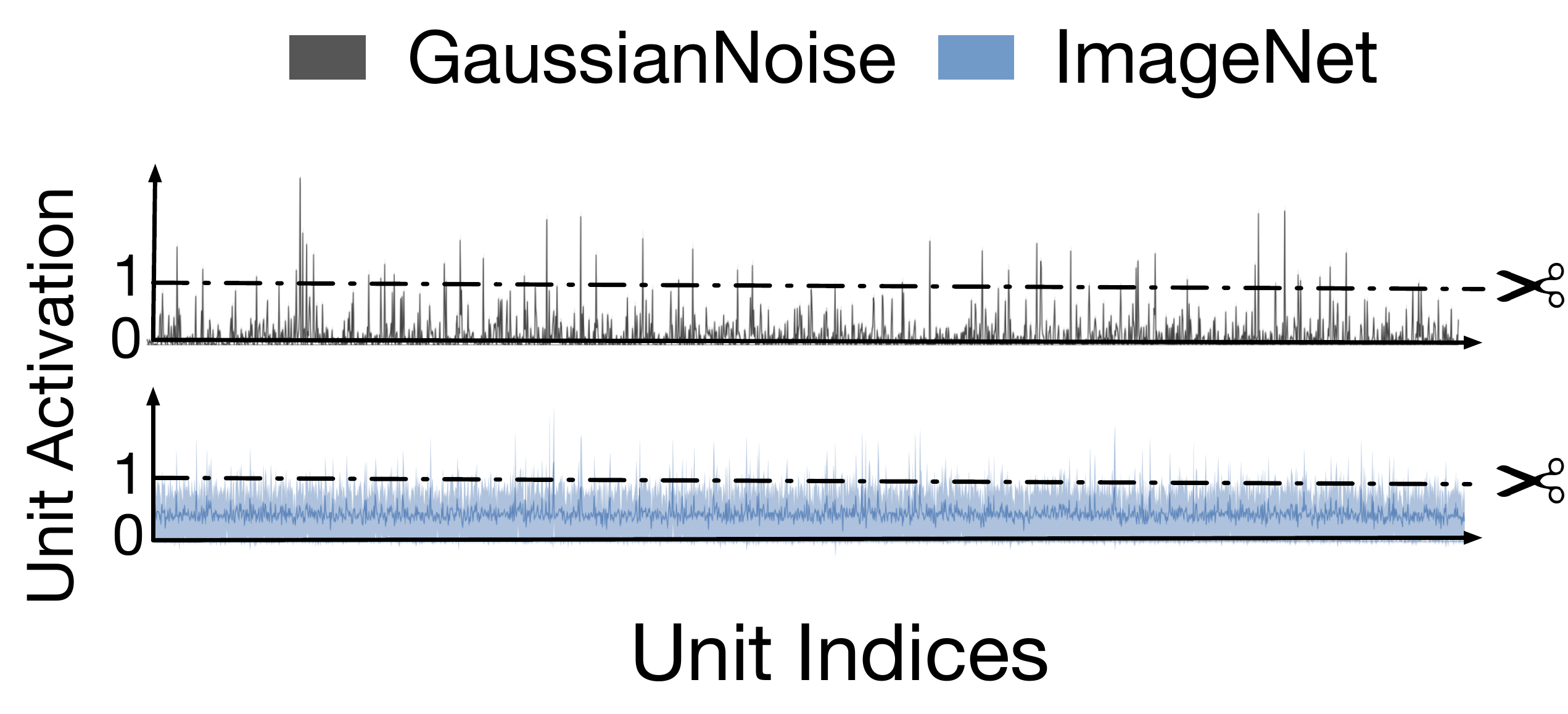}
	\end{center}
	\caption{\small The distribution of per-unit activations in the penultimate layer for OOD data (Gaussian Noise) and ID data (ImageNet) on ResNet50.}
	\vspace{-0.4cm}
	\label{fig:noise_act}
\end{figure}

\begin{sidewaystable}
\scalebox{0.90}{
\begin{tabular}{cllllllll}
\toprule
\multicolumn{1}{l}{\multirow{3}{*}{\textbf{ID Dataset}}} & \multicolumn{1}{c}{\multirow{3}{*}{\textbf{Methods}}} & \multirow{2}{*}{\textbf{SVHN}} & \multirow{2}{*}{\textbf{LSUN-Crop}} & \multirow{2}{*}{\textbf{LSUN-Resize}} & \multirow{2}{*}{\textbf{iSUN}} & \multirow{2}{*}{\textbf{Textures}} & \multirow{2}{*}{\textbf{Places365}} & \multirow{2}{*}{\textbf{Average}} \\
\multicolumn{1}{l}{} & \multicolumn{1}{c}{} & & & & & & & \\ \cline{3-9} 
\multicolumn{1}{l}{} & \multicolumn{1}{c}{} & \multicolumn{7}{c}{\textbf{FPR95 $\downarrow$ / AUROC $\uparrow$ / AUPR $\uparrow$ }} \\ \hline
\multirow{8}{*}{CIFAR-10} & Softmax score~\cite{Kevin} & 59.66/91.25/78.84 & 45.21/93.80/80.81 & 51.93/92.73/80.04 & 54.57/92.12/80.01 & 66.45/88.5/79.47 & 62.46/88.64/75.48 & 56.71/91.17/79.11 \\
 & Softmax score + \methodAbbr & 57.15/91.69/91.77 & 46.37/93.33/93.00 & 46.32/93.61/93.37 & 50.02/92.96/93.26 & 62.85/89.31/92.59 & 60.15/89.28/88.65 & 53.81/91.70/92.11 \\
 & ODIN~\cite{liang2018enhancing} & 60.37/88.27/89.82 & 7.81/98.58/98.73 & 9.24/98.25/98.51 & 11.62/97.91/98.38 & 52.09/89.17/93.72 & 45.49/90.58/90.55 & 31.10/93.79/94.95 \\
 & ODIN+\methodAbbr & 51.77/88.87/89.09 & 14.99/97.29/97.42 & 6.84/98.65/98.81 & 9.55/98.28/98.62 & 43.81/90.41/94.16 & 45.87/90.73/90.82 & 28.81/94.04/94.82 \\
 & Energy~\cite{liu2020energy} & 54.41/91.22/93.05 & 10.19/98.05/98.33 & 23.45/96.14/96.92 & 27.52/95.59/96.78 & 55.23/89.37/94.01 & 42.77/91.02/90.98 & 35.60/93.57/95.01 \\
 & Energy+\methodAbbr & 49.77/92.18/93.67 & 16.99/97.11/97.48 & 17.94/96.98/97.56 & 20.84/96.46/97.38 & 47.96/91.55/95.40 & 43.97/91.33/91.66 & 32.91/94.27/95.53 \\ \midrule
\multirow{8}{*}{CIFAR-100} & Softmax score~\cite{Kevin} & 81.32/77.74/78.78 & 70.11/83.51/83.02 & 82.46/75.73/76.32 & 82.26/76.16/78.26 & 85.11/73.36/80.79 & 83.06/74.47/73.27 & 80.72/76.83/78.41 \\
 & Softmax score + \methodAbbr & 74.17/82.3/85.58 & 73.1/82.47/85.25 & 74.73/80.81/83.77 & 73.49/81.45/85.60 & 74.82/80.37/89.03 & 82.37/74.99/76.43 & 75.45/80.4/84.28 \\
 & ODIN~\cite{liang2018enhancing} & 40.94/93.29/94.49 & 28.72/94.51/94.93 & 79.61/82.13/85.09 & 76.66/83.51/87.35 & 83.63/72.37/82.80 & 87.71/71.46/72.85 & 66.21/82.88/86.25 \\
 &  ODIN+\methodAbbr & 22.87/95.63/96.13 & 36.61/91.45/91.20 & 75.02/85.53/88.42 & 70.21/86.51/89.89 & 66.79/82.69/89.52 & 87.94/69.57/70.01 & 59.91/85.23/87.53 \\
 & Energy~\cite{liu2020energy} & 81.74/84.56/88.39 & 34.78/93.93/94.77 & 73.57/82.99/85.57 & 73.36/83.80/87.40 & 85.87/74.94/84.12 & 82.23/76.68/77.40 & 71.93/82.82/86.28 \\
 & Energy+\methodAbbr & 70.81/88.24/91.07 & 39.99/92.51/93.38 & 54.47/89.56/91.07 & 51.89/90.12/92.29 & 59.15/87.96/93.31 & 81.33/76.49/76.63 & 59.61/87.48/89.63  \\ \bottomrule
\end{tabular}
}
\caption[]{\small Detailed results on six common OOD benchmark datasets: \texttt{Textures}~\cite{cimpoi2014describing}, \texttt{SVHN}~\cite{netzer2011reading}, \texttt{Places365}~\cite{zhou2017places}, \texttt{LSUN-Crop}~\cite{yu2015lsun}, \texttt{LSUN-Resize}~\cite{yu2015lsun}, and \texttt{iSUN}~\cite{xu2015turkergaze}. For each ID dataset, we use the same ResNet-18 architecture~\cite{he2016deep} and compare the performance with and without \methodAbbr respectively. $\uparrow$ indicates larger values are better and $\downarrow$ indicates smaller values are better. }
\label{tab:detail-results}
\end{sidewaystable}

\end{document}